\newcommand{\norm}[1]{\left\lVert #1 \right\rVert}
\title{Color Homography Color Correction}
\author{Graham D. Finlayson~$^{1}$, Han Gong~$^{1}$, and Robert B. Fisher~$^{2}$\\
$^{1}$University of East Anglia, UK\\
$^{2}$University of Edinburgh, UK}
\date{} 
\begin{document} 

\maketitle 

\thispagestyle{empty} 


\begin{abstract}
Homographies -- a mathematical formalism for relating image points across different camera viewpoints -- are at the foundations of geometric methods in computer vision and are used in geometric camera calibration, image registration, and stereo vision and other tasks. In this paper, we show the surprising result that colors across a change in viewing condition (changing light color, shading and camera) are also related by a homography. We propose a new color correction method based on color homography. Experiments demonstrate that solving the color homography problem leads to more accurate calibration.
\end{abstract}

\section{Introduction}
\label{sec:intro}
In image formation there are two important parts, the geometry of how points in space map to image locations and the photometry of how illumination, surface reflectances and camera sensors combine to form the colors in an image. Broadly, the mathematical tools underlying our understanding of image geometry are non-linear reflecting the non-linear perspective nature of image formation. Important non-linear concepts include ''solving for the homography'' (e.g. relating subsequent frames in panorama stitching~\cite{BL07}) and epipolar geometry in stereo vision~\cite{Hartley2004}). In contrast, the majority of methods in color/photometric computer vision are linear which, at least for simplified scenes such as the eponymous Mondrian world~\cite{LAND.SCIENTIFIC.AMERICAN,FORSYTH.NOVEL.IJCV.90} (the world consists of a patchwork of flat co-planar reflectances), reflects the physics of how images are formed. Linear color problems include, color correction~\cite{VRHELMATHEMATICS,DREW.NAT.MET.91} (e.g. mapping raw colors from camera to display RGB) and modeling illuminant color change~\cite{WANDELL87} e.g. for color object recognition~\cite{Lenz}.

\begin{figure}
\begin{center}
  \includegraphics[width=\linewidth]{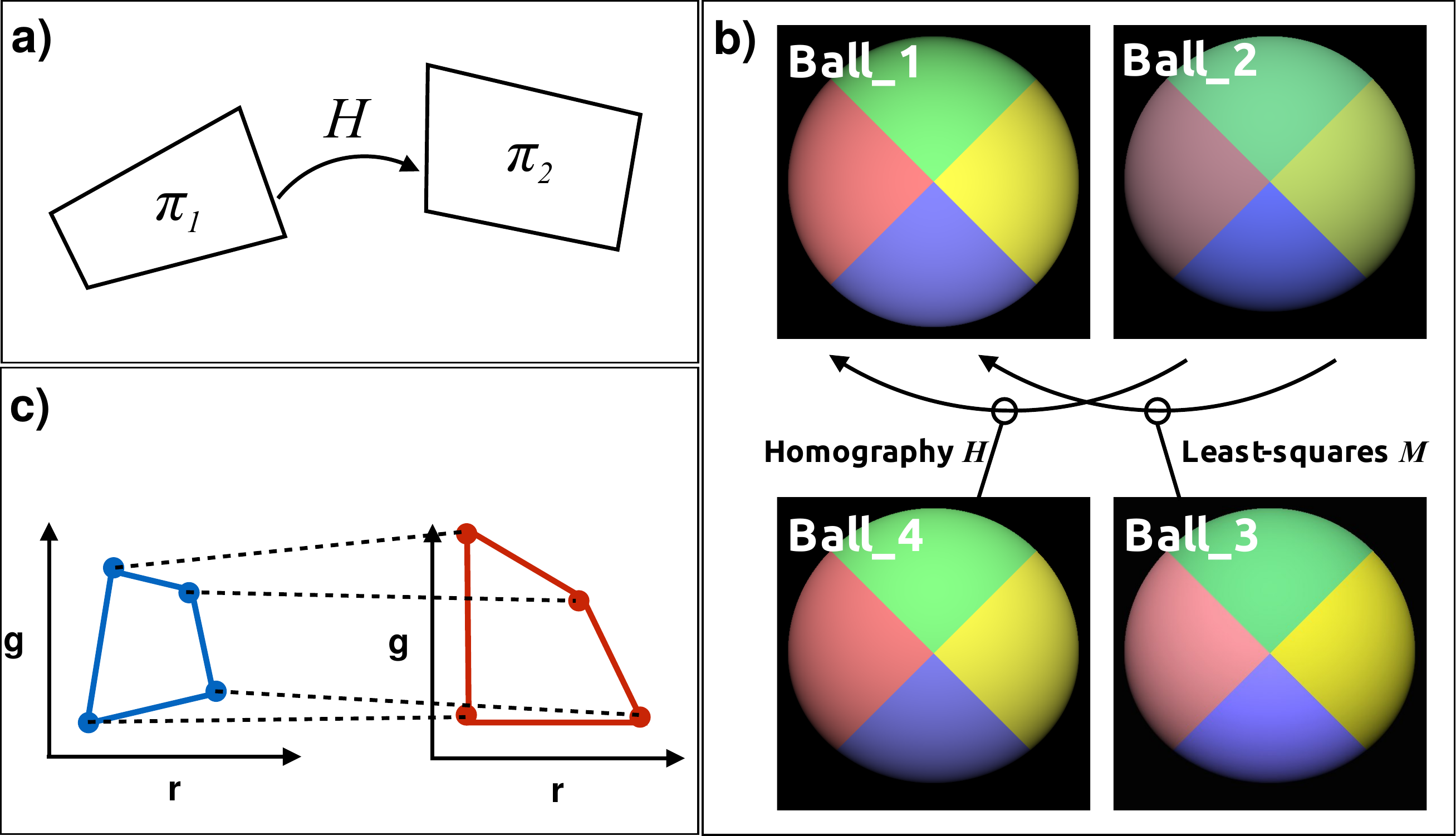}
\end{center}
\caption{Top left, panel (a), images of two planes are related by an homography. Right, panel (b), 4 images of a colored ball are shown. Ball\_1 is the reference image where the illumination color is white and placed behind the camera. Ball\_2 is the object illuminated with a blue light from above. Respectively, Ball\_3 and Ball\_4 are the least-squares mapping and the homography match from Ball\_2 to Ball\_1, Bottom right, panel (c), the chromaticities from Ball\_2 matched to corresponding chromaticities in Ball\_1.}
\label{fig:balls}
\end{figure}

In geometry computer vision, an homography relates two planes. In Figure~\ref{fig:balls}a, $\pi_1$ and $\pi_2$ might denote the same 3D plane viewed in two different images related by the homography $H$.
In color, an homography relates two {\it photometric} views. In Figure~\ref{fig:balls}b, Ball\_1 is the image of the side-view of a 4-color ball where the ball is lit from behind the camera with a white light. The same ball is lit from above with a bluish light, image Ball\_2. The images are in pixel-wise correspondence. We first color correct Ball\_2 to match Ball\_1 by using linear regression. Its results is shown in image Ball\_3 where the colors are incorrectly mapped and the red color segment looks particularly wrong. Ball\_4 shows the results of correcting Ball\_2 by solving for the Homography, visually there is much better color correction.

In this paper, we propose that to map one {\it photometric} view to another we must map the colors correctly independent of shading. Since shading only affects the brightness, or magnitude, of the RGB vectors, it is possible to find the $3\times3$ map which maps the color {\it rays} (the RGBs with arbitrary scalings) in one photometric view to corresponding rays in another. We note that this ''ray matching'' is precisely the circumstance in geometric mapping for co-planar and corresponding points in two images~\cite{Hartley2004}. 
An RGB without shading can be encoded as the rg-chromaticity coordinate. In Figure~\ref{fig:balls}c, the 4 reflectances from the ball correspond to 4 points in an rg-chromaticity diagram which define the quadrilaterals shown in the left and right of the panel (for respectively for the images Ball\_2 and Ball\_1). The mapping between the two chromaticity diagrams is an homography. We show that the color calibration problem -- mapping device RGBs recorded for a color chart to corresponding XYZs -- can be formulated as a homography problem. 

In real images of a color chart the shading can vary from one side to the other and not accounting for this shading variation can result in an incorrect calculation of the color correction matrix~\cite{FuntIntensity}. Methods exist for solving for color correction independent of shading including minimizing the singular fitting error~\cite{FuntIntensity} or an alternating least-squares (ALS) approach where one successively and iteratively solves for the color correction metric and the shading~\cite{ALS}.

According to this paper the correct formalism for color correction independent of shading is to solve for the homography relating the image colors of a color chart to, for example, corresponding XYZ measurements. Unlike linear methods finding the best homography, disadvantageously, requires an iterative approach but advantageously allows different error metrics to be used (not just least-squares error). Compared to respectively, simple least-squares and the prior-art ALS method, homography-based correction delivers a 40 and 10\% performance increment.
\section{Background}
\label{sec:bg}

For the geometric planar homography problem, we write:
\begin{equation}
\left [
\begin{array}{c}
\alpha x\\
\alpha y\\
\alpha
\end{array}
\right ]^\intercal
=
\left [
\begin{array}{c}
x'\\
y'\\
1
\end{array}
\right ]^\intercal
\left [
\begin{array}{ccc}
h_{11} & h_{12} & h_{13} \\
h_{21} & h_{22} & h_{23} \\
h_{31} & h_{32} & h_{33} 
\end{array}
\right ]
\;\triangleq\;
\underline{x}=H(\underline{x}')
\label{eq:homography}
\end{equation}
In Equation~\ref{eq:homography}, $(x,y)$ and $(x',y')$ denote corresponding image points -- the same physical feature -- in two images. In homogeneous coordinates the vector [$a\; b \;c]$ maps to the coordinates $[a/c \;b/c]$ and so, in Equation~\ref{eq:homography}, the scalar $\alpha$ cancels to form the image coordinate $(x,y)$. For all pairs of corresponding points $(x,y)$ and $(x',y')$ that lie on the same plane in 3 dimensional space, Equation~\ref{eq:homography} exactly characterises the relationship between their images~\cite{Hartley2004}.
To solve for an homography (e.g. for image mosaicking), we need to find at least 4 corresponding points in a pair of images.

\section{Color Homography}
\label{sec:main}

Let us map an RGB $\underline{\rho}$ to a corresponding RGI (red-green-intensity) \underline{c} using a $3\times 3$ matrix $C$:
\begin{equation}
\begin{array}{c}
\underline{\rho}C=\underline{c}\\
\;\\
\left [
\begin{array}{c}
R\\
G\\
B
\end{array}
\right ]^\intercal
\left [
\begin{array}{ccc}
1 & 0 & 1\\
0 & 1 & 1\\
0 & 0 & 1
\end{array}
\right ]
=
\left [ 
\begin{array}{c}
R\\
G\\
R+G+B
\end{array}
\right ]^\intercal

\end{array}
\label{eq:chromaticity_conversion}
\end{equation}

The $r$ and $g$ chromaticity coordinates are written as ${r=R/(R+G+B)} \;,\;{g=G/(R+G+B)}$ interpreting the right-hand-side of Equation~\ref{eq:chromaticity_conversion} as a homogeneous coordinate we see that $
\underline{c}\propto \left [
\begin{array}{ccc} r&g&1
\end{array}
\right]
$. In the following proof it is useful to represent 2-d chromaticities by their corresponding 3-d homogeneous coordinates.

\newtheorem{thm}{Theorem}
\begin{thm}[Color Homography]
Chromaticities across a change in capture condition (light color, shading and imaging device) are a homography apart.
\end{thm}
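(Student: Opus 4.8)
The plan is to reduce the statement to the algebraic form of Equation~\ref{eq:homography} by composing the two physical effects of a change of capture condition --- a per-pixel shading rescaling and a single global linear recolouring --- and then conjugating that composition by the chromaticity matrix $C$ of Equation~\ref{eq:chromaticity_conversion}.

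First I would fix the image-formation model. For a flat Lambertian patch $i$ seen under an illuminant with spectrum $E$ by a camera with sensitivities $Q_k$, the recorded RGB factors as $\underline{\rho}_i = \sigma_i\,\underline{\hat{\rho}}_i$, where $\underline{\hat{\rho}}_i$ depends only on reflectance, light spectrum and sensors, while the positive scalar $\sigma_i$ carries all of the shading (surface-normal / illumination-geometry) information and is allowed to vary arbitrarily from patch to patch. Second, I would invoke the standard linear colour model --- exact in the Mondrian world under the usual low-dimensional reflectance or narrow-band sensor assumptions --- which says that changing the light colour and/or the imaging device replaces $\underline{\hat{\rho}}_i$ by $\underline{\hat{\rho}}_i M$ for a single nonsingular $3\times3$ matrix $M$ shared by all patches. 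Absorbing the reference shading into $\underline{\rho}_i$, a reference capture and a second capture $\underline{\rho}'_i$ of the same scene therefore satisfy $\underline{\rho}'_i = \sigma_i\,\underline{\rho}_i M$ for per-patch scalars $\sigma_i$.

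Next I would pass to RGI/chromaticity coordinates. Writing $\underline{c}_i = \underline{\rho}_i C$ and $\underline{c}'_i = \underline{\rho}'_i C$ with $C$ as in Equation~\ref{eq:chromaticity_conversion}, the relation above becomes $\underline{c}'_i = \sigma_i\,\underline{\rho}_i M C = \sigma_i\,\underline{\rho}_i C\,(C^{-1} M C) = \sigma_i\,\underline{c}_i\,(C^{-1} M C)$. Put $H = C^{-1} M C$, which is nonsingular because $M$ and $C$ are. As observed just before the theorem, $\underline{c}_i \propto [\,r_i\;g_i\;1\,]$ and $\underline{c}'_i \propto [\,r'_i\;g'_i\;1\,]$ are homogeneous representatives of the $2$-d chromaticities, so the scalar $\sigma_i$ --- together with any normalisation constant hidden in ``$\propto$'' --- is invisible: $[\,r'_i\;g'_i\;1\,] \propto [\,r_i\;g_i\;1\,] H$. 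This is precisely the relation of Equation~\ref{eq:homography}, so the two chromaticity sets are one homography $H$ apart, which is the claim.

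The genuinely routine steps are the conjugation identity $M C = C\,(C^{-1} M C)$ and the bookkeeping by which the unknown per-patch shading is swallowed by the projective equivalence of homogeneous coordinates. The step I expect to be the main obstacle to state cleanly is the second modelling assumption: that a \emph{joint} change of illuminant colour and camera acts on RGBs as one fixed $3\times3$ linear map. I would justify it with the usual argument that in the Mondrian world, with $\le 3$-dimensional reflectances (or sufficiently narrow-band sensors), both illuminant change and the camera-to-camera / camera-to-XYZ map are exactly linear, and note that outside these idealisations the theorem holds only to the same approximation that already underlies all linear colour correction.
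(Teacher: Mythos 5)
Your proposal is correct and follows essentially the same route as the paper: assume a single nonsingular $3\times3$ linear map $M$ relates shading-free RGBs across the change of condition, conjugate by the RGI matrix $C$ to obtain the chromaticity map, and let the per-patch shading scalar be absorbed by the projective equivalence of homogeneous coordinates. Your extra care in factoring out the shading and in writing the conjugation as $H=C^{-1}MC$ (which matches the paper's row-vector convention $\underline{\rho}C=\underline{c}$; the paper states $H=CMC^{-1}$, the same conjugation under the opposite multiplication convention) only makes explicit what the paper's terser proof assumes.
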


\begin{proof}
First we assume that across a change in illumination or a change in device where the shading is the same (for the Mondrian-world) the corresponding RGBs are related by a linear transform M. Clearly, $H=CMC^{-1}$ maps colors in RGI form between illuminants. Due to different shading, the RGI triple under a second light is represented as $\underline{c}'=\alpha'\underline{c}H$, where $\alpha'$ denotes the unknown scaling. Without loss of generality let us interpret \underline{c} as a homogeneous coordinate i.e. assume its third component is 1. Then, $[r'\;g']=H([r\;g])$ (chromaticity coordinates are a homography $H()$ apart).
\end{proof}

In geometry, homographies are applied for mapping spatial coordinates in one image to correspondences in another. In color, we apply homography to shading independent color correction -- the homography that maps 3D colors to 3D color matches. As an example we find the 3x3 matrix that best maps the shading dependent RGBs in the image of a color target and map them to the corresponding XYZs in a way that is independent of the shading. We can visualize solving for the homography as finding the best linear relation that maps chromaticities in one viewing condition to another. Where by linear means we map 2D chromaticities to corresponding 3D rays, apply the $3\times3$ matrix, and then recompute chromaticities.
\section{Color Homography Color Correction}
\label{sec:exp}

\begin{figure}
\begin{center}
  \includegraphics[width=\linewidth]{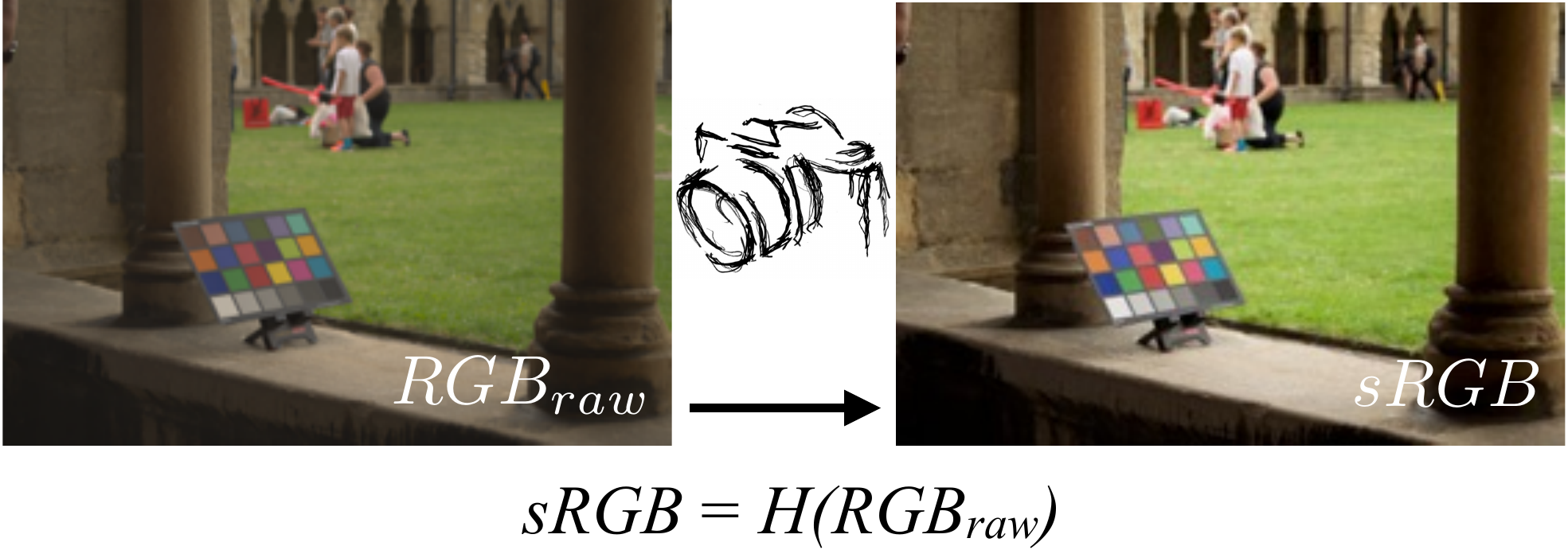}
\end{center}
\caption{Color correction (mapping raw to display sRGB) is an homography problem. This figure is also a color chart example which was used for our color correction evaluation.}
\label{fig:chart}
\end{figure}
In Figure~\ref{fig:chart}, we show the picture of an image in the raw RGB space of a camera and the corresponding reproduction when the colors are corrected for display. Professional imaging scientist might take a picture of a color checker and a second picture of a uniform gray target with same size in the same location. By dividing the RGB image of the checker by the image of the gray-target the shading is removed and then the shading corrected RGBs can be mapped to known reference display color coordinates using simple least-squares. However, this two-step approach is inconvenient and in some cases cannot be done at all (e.g. in an on-going surveillance situation).

Therefore, if a color chart is not used we must solve for the $3\times3$ color correction matrix by solving for the homography.

\subsection{Alternating Least-Squares Color Correction}
Suppose $A$ and $B$ denote respectively $n\times 3$ matrices of $n$ corresponding pixels with respect to a set of captured RGBs and a corresponding set of XYZs. And, due to the relative positions of light and surfaces, the per-pixel shading intensities of the RGB set are usually different. Assuming the Lambertian image formation,
\begin{equation}
DAH\approx B
\label{eq:image_formation}
\end{equation}
where $D$ is an $n\times n$ diagonal matrix of shading factors and $H$ is a $3\times 3$ color correction matrix. In \cite{ALS} this equation is solved by using Alternating Least-Squares (ALS) described in Algorithm~\ref{alg:als}
\begin{algorithm}
\SetAlgoLined
    $i=0$, $A^0=A$\;
    \Repeat{$\norm{A^{i}-A^{i-1}}<\epsilon$}{
    $i=i+1$\;
    $\min_{D^i} \norm{D^iA^{i-1}-B}$\;
    $\min_{H^i} \norm{D^iA^{i-1}H^i-B}$\;
    $A^{i}=D^iA^{i-1}H^i$\;
    }
\caption{Homography from Alternating Least-Square}
\label{alg:als}
\end{algorithm}
where $D^i$ and $H^i$ are iteratively and successively minimized. The effect of the individual $H^i$ and $D^i$ can be combined into a single matrix $D=\prod_{i} D^i$ and $H=\prod_{i} H_i$. The Homography computed here (not because we calculate the shading we find the color correction independent of shading) attempts to minimize a least-squares error. However, as we know the output of color correction might be better assessed with reference to perceived error. For example we might compare the color corrected RGBs with the corresponding XYZs using CIE L*a*b* or CIE L*u*v*~\cite{cie}.

Other prior art~\cite{Funt-angle} searches for the best color correction method and with respect to that framework the practitioner can choose the error metric used. However, because, in this paper, we have made the link to homographies we prove that the apposite tool is to use a particular robust searching technique -- used frequently in geometric computer vision -- called RANSAC.

\subsection{RANSAC Color Correction}
To solve for a homography we need 4 corresponding sets of chromaticities (e.g. the rg and xy chromaticities for the RGBs and XYZs). We might then test how good this homography is for mapping the rg and xy chromaticities. In this paper we are interested in fitting a homography that minimizes a perceptual error and in line with~\cite{RPCC} we first choose CIE L*u*v*. It then makes sense to measure the goodness of fit by comparing actual and fitted u*v* coordinates.

The number of chromaticities that are fitted within a criterion error are said to belong to a consensus set. We iteratively find a homography using match sets of 4 corresponding rg and uv chromaticities and measure the size of the chromaticity error and the size of the consensus set. We keep sampling until the best consensus set is sufficiently large or we have reached the maximum number of trials (in which case we choose the homography that has the lowest error). RANSAC stands for ``random sampling consensus''. A key advantage of RANSAC is that it is a robust statistical estimator~\cite{robust_stat}. However, due to the randomness nature of RANSAC, our color correction does not always give the ``best'' solution. RANSAC also assumes that the majority of data are inliers whose distribution can be explained by a set of model parameters.

\section{Evaluation}
In Figure~\ref{fig:chart} left we show a raw image and the color corrected counterpart (with tone mapping) in the right. We wish to evaluate color correction when shading varies across the color target (in this case a Macbeth Color Checker). We adopt the following experimental methodology. First we measure, in the lab, the actual ground truth XYZs. Second we, in situ, measure the RGBs in a raw image where the shading can vary across the chart. We now solve directly for the correction matrix mapping the RGBs to XYZs. We do this using simple least-squares and using two homography methods. First, we use the Alternating Least-Squares method and second the RANSAC homography method.

We apply the three computed correction matrices (least-squares, ALS and RANSAC) to the RGBs form the checker where the effects of intensity variation has been removed. The intensity variation is removed by dividing by the brightness image of a uniform grey checker images in the same location in the same scene.

In evaluating our algorithms we will use both CIE L*a*b* and CIE L*u*v*. This said it is important to reiterate how we solve for the correction matrices. The Least-squares matrix is found by minimizing the mapping from RGB to XYZ. The ALS method also minimizes the fit to XYZ but finds the best homography (shading independent linear fit). The RANSAC method is a robust fitting strategy which directly minimizes CIE L*u*v* chromaticity error.

For evaluation, we use 3 images that were taken around a local historical site that is popular with amateur and professional photographers alike (e.g. Figure~\ref{fig:chart}).
The mean, median, 95\% quantile and max $\Delta$E errors are reported in Table~\ref{tab:calibration_error} and Table~\ref{tab:calibration_error_luv}. It is clear that RANSAC-homography-based color correction supports a significantly improved color correction performance compared with the simple least square (all error measures are about 40\% improved). Compared with the ALS color correction, all mean errors are improved by about 10\%, and all median errors are improved by about 20\%, at the cost of getting slightly higher 95\% quantile error and maximum error.
\begin{table}[htb]
\centering
\caption{CIE $\Delta$E L*a*b* error}
\label{tab:calibration_error}
\begin{tabular}{lcccc} \toprule
Method & Mean & Median & 95\% & Max \\ \midrule
Least-Squares & 3.70 & 3.30 & 7.73  & 8.39 \\
ALS~\cite{ALS} & 2.36 & 2.04 & \textbf{5.11} & \textbf{5.64} \\
Homography & \textbf{2.18} & \textbf{1.69} & 5.46 & 6.18 \\
\bottomrule
\end{tabular}
\end{table}

\begin{table}[htb]
\caption{CIE $\Delta$E L*u*v* error}
\label{tab:calibration_error_luv}
\centering
\begin{tabular}{lcccc} \toprule
Method & Mean & Median & 95\% & Max \\ \midrule
Least-Squares & 4.28 & 3.84 & 8.30  & 9.53 \\
ALS~\cite{ALS} & 2.73 & 2.41 & \textbf{5.82} & \textbf{6.54} \\
Homography & \textbf{2.58} & \textbf{2.11} & 6.62 & 6.94 \\
\bottomrule
\end{tabular}
\end{table}

\section{Conclusion}
\label{sec:con}
In this paper, we shown the surprising result that colors across a change in viewing condition (changing light color, shading and camera) are related by a homography. Our RANSAC-based color homography color correction is robust to outliers and delivers improved color fidelity compared with the state-of-the-arts.

\section{Acknowledgment}
This work was supported by EPSRC Grant EP/M001768/1.

{\small
\bibliographystyle{plain}

}

\end{document}